\newtheorem{theorem}{Theorem}
\newcommand{\ra}{\rightarrow}
\newcommand{\ignore}[1]{}
\newcommand{\Exp}[1]{\mathbb{E}\left[#1\right]} %Expectation
\newcommand{\floor}[1]{\left\lfloor #1 \right\rfloor}
\newcommand{\N}{\mathbb{N}}
\newcommand{\Z}{\mathbb{Z}}
\DeclareMathOperator*{\argmax}{arg\,max}
\newcommand{\M}{\mathcal{M}}
\renewcommand{\i}{^{(i)}}
\newcommand{\B}{\mathcal{B}} 
\newcommand{\s}{\mathcal{S}} 
\newcommand{\A}{\mathcal{A}}
\newcommand{\R}{\mathcal{R}} 
\newcommand{\p}{\mathcal{P}}  
\newcommand{\jk}[1]{ \ifthenelse{\boolean{showcomments}}
	{\textcolor{red}{(JK says: #1)}} {} }
\newcommand{\vd}[1]{\ifthenelse{\boolean{showcomments}} 
	{\textcolor{blue}{(VD says: #1)}} {} }
\def\BibTeX{{\rm B\kern-.05em{\sc i\kern-.025em b}\kern-.08em
    T\kern-.1667em\lower.7ex\hbox{E}\kern-.125emX}}
\begin{document}

\title{Optimal Cycling of a Heterogenous Battery Bank via
  Reinforcement Learning \thanks{This work was supported by grant
    DST/CERI/MI/SG/2017/077 under the Mission Innovation program on
    Smart Grids by the Department of Science and Technology, India.}
}

\author{\IEEEauthorblockN{Vivek Deulkar}
\IEEEauthorblockA{\textit{Dept. of Electrical Engineering} \\
  \textit{IIT Bombay}}
\and
\IEEEauthorblockN{Jayakrishnan Nair}
\IEEEauthorblockA{\textit{Dept. of Electrical Engineering} \\
\textit{IIT Bombay}}
}

\maketitle

\begin{abstract}
  We consider the problem of optimal charging/discharging of a bank of
heterogenous battery units, driven by stochastic electricity
generation and demand processes. The batteries in the battery bank may
differ with respect to their capacities, ramp constraints, losses, as
well as cycling costs. The goal is to minimize the degradation costs
associated with battery cycling in the long run; this is posed
formally as a Markov decision process. We propose a linear function
approximation based $Q$-learning algorithm for learning the optimal
solution, using a specially designed class of kernel functions that
approximate the structure of the value functions associated with the
MDP. The proposed algorithm is validated via an extensive case study.

\end{abstract}

\begin{IEEEkeywords}
  grid-scale storage, battery management, heterogenous battery bank,
  cycling costs, reinforcement learning, function approximation 
\end{IEEEkeywords}

\section{Introduction}
\label{sec:intro}

%P1: Grid scale storage will play a key role in the translation to
%high penetration of renewables. Perhaps a comment on how the volume
%of grid connected storage is

Grid-scale battery storage will play a key role in reliable power grid
operation, as we transition to higher and higher penetrations of
renewable generation. While the capital cost of battery storage is
still prohibitive, costs have declined sharply over the past few years
\cite{Cole2019}. As a result, several electric utilities have
installed lithium-ion based battery storage systems with capacities
running into hundreds of megawatt-hours in recent years. 
%(see, for
%example, \cite{EIA_battery_trend}). 
Going forward, we should expect that multiple
battery units, of varying chemistries and capacities, would be
simultaneously connected to the grid, necessitating a sophisticated
battery management system.

%P2: We focus on the optimal operation of a bank of batteries, made up
%of heterogenous batteries. They may differ in terms of storage
%capacity, ramping constraints, cycling costs, losses. For example, we
%need fast batteries to aid in the frequency regulation, whereas a
%larger slowe battery can be used to track the slower timescale
%variation of renewable generation.

In this paper, we focus on the optimal operation of a bank of
heterogenous batteries. The batteries may differ in terms of storage
capacity, ramp constraints, cycling costs, as well as losses. In
practice, such heterogenous battery banks can arise either due to the
ongoing evolution of the state of the art, and also due to
heterogeneity in use cases. Indeed, the battery chemistry (and scale)
best suited to support frequency regulation services, which require
high-frequency cycling, might be different from what is best suited to
the durnal cycling needed to match solar generation with household
electricity consumption.

%P3: We pose this as an MDP. But optimal solution is hard, model
%parameters may be a priori unknown. 

We pose the optimal battery bank management problem, from the
standpoint of an electric utility, as a Markov decision process
(MDP). The state evolution dynamics of this MDP capture both the
(uncontrollable) supply-side and demand-side uncertainties, as well as
the (controllable) dynamics of the state of charge of various battery
units. The learning task is to optimally charge/discharge the battery
bank with the goal of minimizing the overall degradation of the
battery bank due to cycling. Since it is natural to assume that the
dynamics of this MDP are not a priori known precisely, we adopt a
reinforcement learning (RL) based approach. Moreover, in order to
tackle the state-action space explosion inherent in the MDP, we resort
to function approximation aided $Q$-learning (see
\cite{sutton2018reinforcement}). This involves the novel design of a
compact collection of features (a.k.a. kernel functions) that capture
the shapes of our value functions. Finally, we validate the proposed
approach via extensive case studies.

The contributions of this paper may be summarized as follows.
\begin{itemize}
\item We pose the optimal operation of a heterogenous battery bank to
  match a stochastic electricity generation process with a stochastic
  electrcity demand process as an MDP. The objective here is to
  minimize the cycling degradation of the battery bank over time.
\item In certain idealized cases, we show that a myopic greedy policy
  is optimal for the above MDP. However, this greedy policy is not
  optimal once losses and ramp constraints are taken into account.
\item We design a compact novel collection of features to aid the
  learning of the $Q$-function associated with the MDP, via linear
  function approximation. The family of features is chosen so as to
  approximate the specific structure of the value functions
  corresponding to the MDP. We then learn the feature weights via a
  stochastic approximation algorithm.
\item The proposed algorithm is validated in a case study, where it is
  shown to outperform greedy battery operation (in the presence of
  ramp constraints) as well as a naive proportional allocation policy.
\end{itemize}

\section{Model and Preliminaries}
\label{sec:model}

In this section, we describe our model for the management of a
heterogenous battery bank, in the form of a Markov decision process
(MDP). We follow the convention of using capital letters to denote
random quantities, and the correponding small letters to denote
generic realizations of those random quantities. Throughout, we use
$\N$ to denote the set of natural numbers, and $\Z$ to denote the set
of integers. For $n \in \N,$ let $[n] := \{0,1,\cdots,n\}.$

It is assumed that the learning agent manages a bank of~$N$ battery
units, labelled $1,2, \cdots, N.$ The capacity of Battery~$i$ equals
$\B^{(i)} \in \N$ units. Here, the battery capacities are specified in
terms of a suitably small unit of energy (say, 1 kWh) relative to the
scale of generation and consumption under consideration;
%\vd{full stop here??} \jk{No, intended the semi-colon here}
we always measure generation, storage, and consumption of energy as an
integer multiple of this unit. A discrete time setting is considered,
with $B^{(i)}_k \in [\B^{(i)}]$ denoting the energy stored in
Battery~$i$ at time~$k.$
%where $i^{th}$ battery has a storage capacity of $B^i$ units of
%energy. We consider a descrete-time setting with $k$ being time index.
%Consider a discrete-time setting. We have $N$ battery units where
%$i^{th}$ battery has a storage capacity $B_i\in \mathbb{N}.$
%We assume that the energy $b_i^k$ in the $i^{th}$ battery at time $k,$
%is measured with sufficiently small resolution and hence can be
%represented by an interger. Thus $b_i^k, B_i \in \mathbb{N} \ \forall
%i.$
The random \emph{net generation} at time~$k,$ i.e., the energy
generation minus the energy demand at time~$k,$ is denoted by
$E_k \in \Z.$ Note that a positive value of $E_k$ indicates an
instantaneous surplus in generation, whereas a negative value
indicates an instantaneous deficit. The agent must in turn apportion
this net generation across the battery bank, by charging the bank in
the former scenario, and discharging the bank in the latter (subject
to capacity and ramp constraints).

\subsection{Stochastic model for net generation}

The net generation $E_k$ at any time~$k$ equals the difference between
the (random) energy generation and the (random) energy demand in that
time slot. Thus, the net generation encapsulates both supply side
uncertainty (due to renewable generation) as well as demand side
uncertainty. We model the stochastic net generation as a function of a
certain Markov process~$\{X_k\}.$ The state of this Markov process
captures all those factors that influence supply and demand, including
weather conditions, seasonal factors, time of day, generation/demand
seen in the recent past, etc. Formally, $\{X_k\}$ is modelled as an
irreducible Discrete-Time Markov Chain (DTMC) over a finite state
space $\s_e.$ The net generation at time~$k$ is in turn a function of
the state of this chain, i.e., $E_k = f(X_k),$ where $f\colon\s_e \ra
\Z.$ Note that this model allows us to capture arbitrary dependencies
between generation and demand, as well as seasonal/diurnal variations.

\subsection{MDP description} We now formally define the MDP for optimal
management of the battery bank. An MDP is a tuple $\M = (\s, \A, \p,
\R,\gamma).$ Here, $\s$ denotes the state space, and $\A = \cup_{s \in
  \s} \A_s$ denotes the action space, with $\A_s$ being the set of
feasible actions in state~$s.$ $\p$ captures the transition structure,
where $\p(s^\prime|s,a)$ is the probability of transitioning to state
$s^\prime$ when action $a$ is played in state~$s.$ $\R$ captures the
reward structure, such that $\R(s,a)$ is the reward obtained on taking
the action $a$ in state $s.$ Finally,~$\gamma \in (0,1)$ denotes the
infinite horizon discount factor.
%\vd{just the MDP discount factor??  for finite horizone MDP as well
%  ppl use discounting} \jk{I know. But we are using an infinite
%  horizon model no?}

The state of the MDP at time~$k$ is given by $S_k=(X_k,B_k),$ where
$B_k = (B^{(i)}_k,\ 1\leq i \leq N)$ is the vector of battery
occupancies at time~$k.$ Thus, the state space is given by $\s := \s_e
\times \prod_{i=1}^N [\B_i].$

The action at time~$k$ is given by the tuple~$A_k = (A^{(i)}_k,\ 1
\leq i \leq N),$ where $A^{(i)}_k \in \Z$ is the number of energy
units injected into Battery~$i$ at time~$k;$ a negative value of
$A^{(i)}_k$ corresponds to the action of discharging Battery~$i$ by
$|A^{(i)}_k|$ energy units. Specifically, the action~$A_k$ causes the
battery occupancies to evolve as follows:
\begin{equation}
  \label{eq:batt_evolution}
  B^{(i)}_{k+1} = \floor{\eta\i (B^{(i)}_k + A^{(i)}_k)} \qquad (1 \leq i \leq N)
\end{equation}
Here, the factor $\eta\i \in (0,1)$ captures the \emph{energy
dissipation loss} of Battery~$i.$\footnote{A separate
charging/discharging loss can also be easily incorporated into the
model, as in \cite{kazhamiaka2018simple}.}

In a generic state $s = (x,b),$ the components of the action
vector~$a$ are constrained as follows:
\begin{align}
  \label{eq:ramp_constraint}
  &|a\i| \leq c\i \qquad (1 \leq i \leq N)\\
  \label{eq:capacity_constraint}
  &0 \leq a\i + b\i \leq \B\i  \qquad (1 \leq i \leq N)
\end{align}
Here, $c\i \in \N$ denotes the maximum number of energy units that can
be injected/extracted from Battery~$i$ at any epoch. In other words,
\eqref{eq:ramp_constraint} specifies \emph{ramp constraints} on the
battery bank. The constraint~\eqref{eq:capacity_constraint} enforces
the \emph{boundary conditions}, i.e., the battery cannot charged
beyond its capacity, and cannot be discharged beyond its present
occupancy. Additionally, we impose the following constraint on
$\sum_{i = 1}^N a\i:$
%\vd{(replace $f(k)$ in \ref{eq:action_sum_constraint} by $E(k)$??)}
%  \jk{No. $E(k)$ is the \emph{random} net generation at time~$k.$ We
%  are describing the action space corresponding to a generic state
%  $(x,b).$}
\begin{equation}
  \label{eq:action_sum_constraint}
  \begin{array}{rl}
    \sum_{i = 1}^N a\i &= [f(x)]_{m(b)}^{M(b)},\\
    \text{where } m(b) &:= -\sum_{i=1}^N \min \{b\i, c\i \},\\
    M(b)&:=\sum_{i=1}^N \min  \big\{(B\i-b\i ),c\i \big \}.
  \end{array}
\end{equation}
Here, $[y]_{m}^M := \min(\max(y,m),M)$ denotes the projection of~$y$
on the interval $[m,M].$ Basically, the
constraint~\eqref{eq:action_sum_constraint} states that $\sum_{i =
  1}^N a\i$ matches the net generation $f(x)$ as far as possible,
subject to ramp/capacity constraints on the battery bank. Indeed, note
that $M(b)$ equals the maximum number of energy units that can be
injected into the bank, and $-m(b)$ is the maximum number of energy
units that can be drained out of the
bank. Thus,~\eqref{eq:action_sum_constraint} ensures that the battery
bank charges to the extent possible when there is a generation
surplus, and discharges as far as possible to meet a generation
deficit. To summarize, the set of feasible actions~$\A_s$ in state $s
= (x,b)$ is defined by the
constraints~\eqref{eq:ramp_constraint}--\eqref{eq:action_sum_constraint}.

Next, note that the transition structure~$\p$ of the MDP is dictated
by that of the DTMC~$\{X_k\}.$ Specifically, on taking the action~$a$
in state~$s = (x,b),$ the first component~$x$ of the state evolves as
per the transition probability matrix of the DTMC~$\{X_k\}.$ The
second component~$b,$ which captures the battery occupancies, evolves
(deterministically) as per~\eqref{eq:batt_evolution}.

Finally, we define the reward structure~$\R.$ The reward structure
captures the cost/penalty associated with battery
cycling.\footnote{Indeed, it is well known that complete
charge-discharge cycling tends to diminish battery life (see
\cite{garche1997influence}). Therefore, it is natural to avoid
battery operation near full/empty levels, and rather try to operate it
at intermediate levels as far as possible.} We apply a penalty
whenever the present action causes each battery level to be below 20\%
or above 80\% of its capacity.\footnote{These specific thresholds are
only chosen for illustration. In practice, these thresholds can be set
specific to the chemistry of each battery.}  The incured penalty is
further proportional to the amount by which battery energy level
violates these thresholds (see Figure~\ref{fig:penalty_fun}). It has a
multiplying prefactor $r^{(\cdot)}$ which is specific to the battery
under consideration.
\begin{figure}
	\centering
	\includegraphics[scale=0.30]{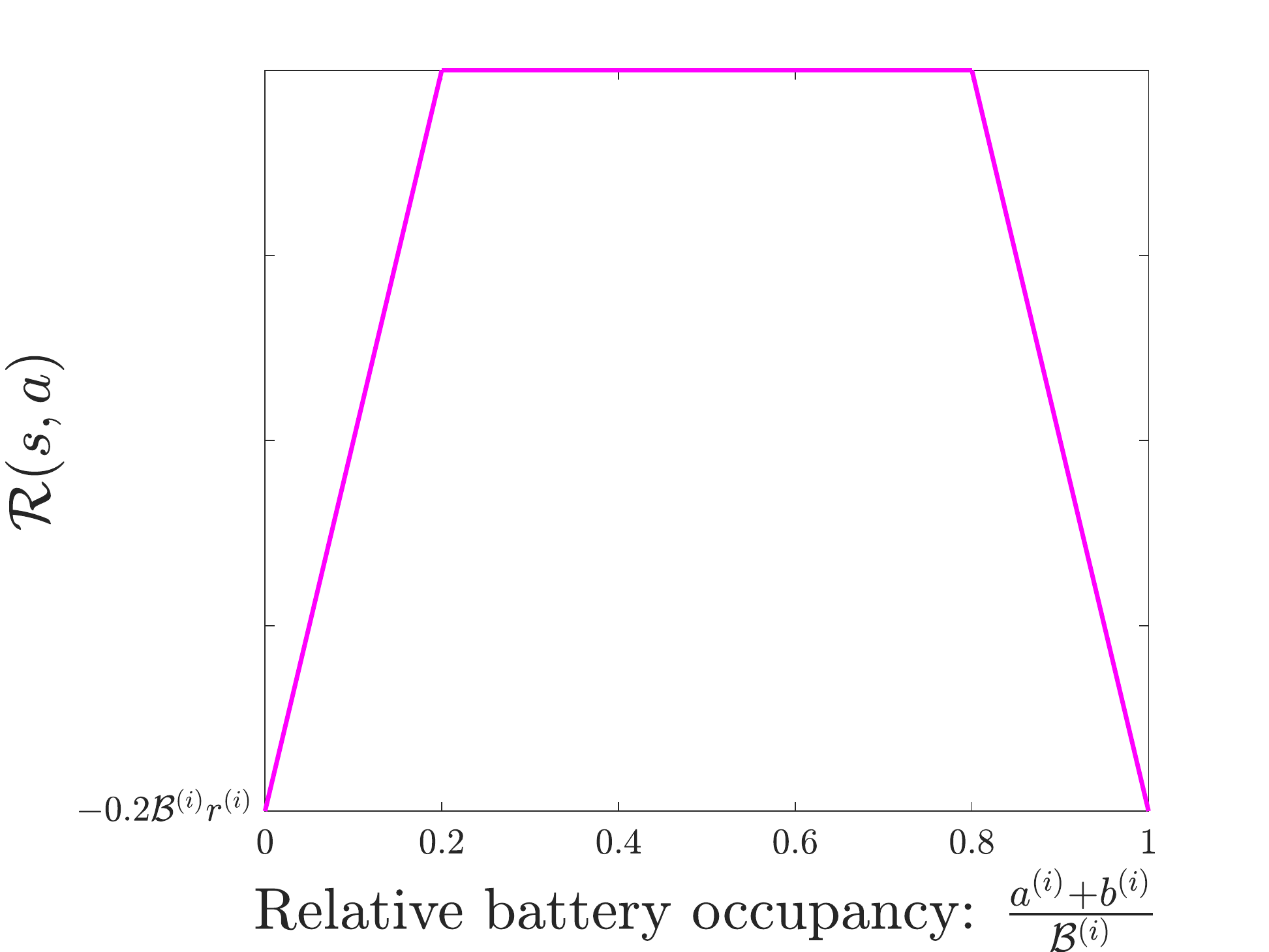}
	\caption{Reward as a function of fractional battery occupancy
          post current action}
	\label{fig:penalty_fun}
\end{figure}
Formally, the reward is given by:
%we apply a reward of $-r\i (0.2\B\i - a\i-b\i)$ (equivalently, a
%penalty without negative sign) if $i^{th}$ battery level falls below
%20\% of its capacity. We apply a reward of $-r\i (a\i + b\i -
%0.8\B\i)$ if $i^{th}$ battery level is greater than 80\% of its
%capacity. Thus, Formally, we apply a reward of $-r\i$(equivalently, a
%penalty of $r\i$) whenever Battery~$i$ becomes either full of
%empty. Thus,
\begin{equation*}
\begin{split}
  \R(s,a) = -&\sum_{i=1}^N r\i \Big\{(0.2\B\i - a\i-b\i)_+ \\
  &\quad + (a\i + b\i - 0.8\B\i)_+\Big\},
\end{split}
\end{equation*}
where $(z)_+ := \max(z,0).$ % \jk{In the figure, please remove the numbers from the y-axis. The reward range on the y-axis should be 0 to $-0.2B\i r\i.$} 
Note that the reward structure disincentivises
operating each battery at close to empty/full charge, the disincentive
being (potentially) heterogenous across batteries. (Batteries that
suffer a larger degradation due to cycling would be associated with a
greater penalty factor~$r^{(\cdot)}.$)

%Note that the above model allows heterogenous cycling
%costs within the battery bank.

Finally, the goal of the learning agent is to maximize the infinite
horizon discounted reward, i.e.,
\begin{equation}
\label{eq:infinite_horizon_obj}
\Exp{\sum_{k =0}^{\infty} \gamma^k \R(S_k,A_k)},
\end{equation}
for any starting state~$S_0.$

\subsection{Policies and Information Structure}

In general, a policy specifies the action to be taken as a function of
the agent's observation history. A \emph{stationary deterministic
policy} $\pi\colon\s \rightarrow \A$ specifies a feasible action
purely as a function of the current state. It is well known that there
exists an optimal
%\vd{(for a finite MDP, see \cite{}??)} \jk{You don't need the
%  MDP to be finite for this to hold.}
stationary deterministic policy
that maximizes \eqref{eq:infinite_horizon_obj} (see
\cite{puterman2014markov}). Moreover, this policy can be computed
given the MDP primitives $(\s, \A, \p, \R,\gamma)$. In the present
context however, it is often impractical to assume prior knowledge of
the primitives. In particular, the agent may not a priori know the
transition structure~$\p$ governing the dynamics of generation/demand
evolution precisely. This motivates us to pursue a reinforcement
learning (RL) based approach to discover a near-optimal policy online.

For certain special cases (i.e., in the absence of ramp constraints
and ignoring losses), it can be proved that a simple greedy policy is
in fact optimal for the MDP, as we show in
Section~\ref{sec:greedy}. However, in general, the optimal policy is
non-trivial, and we find that the proposed RL based approach
(described in Section~\ref{sec:rl}) outperforms naive/greedy policies
(see Section~\ref{sec:case_study}).

\section{Greedy battery bank management}
\label{sec:greedy}

In this section, we define and explore a greedy policy for the MDP
posed in Section~\ref{sec:model}. This policy chooses, in any state,
an action that maximizes the instantaneous reward. Importantly, in the
context of the battery bank management problem under consideration,
this greedy policy can be applied without prior knowledge of the
transition structure~$\p.$
%\vd{(Greedy policy, however, needs the
%  knowledge of reward structure a priori??Another point which
%  motivates RL based approach??)} \jk{Yes, but I think it is only
%  reasonable to assume the battery model is known to us. In practice,
%  the battery gives no indication of its cycling cost; this is a
%  made-up cost anyways, to induce meaningful operation.}
In other words, the greedy policy can be applied without the need for
any `learning' of the MDP dynamics. While the greedy policy is not
optimal is general, we show that it is optimal in an idealized special
case, namely when the batteries are lossless and are not subject to
any ramp constraints. From a practical standpoint, this means that the
greedy policy is expected to be near-optimal in scenarios where the
batteries are capable of fast charging/discharging, and battery losses
are negligible.

Formally, a greedy policy $\pi_{\mathrm{greedy}}$ is defined as
follows. In state~$s = (x,b),$ it chooses an action
%A greedy policy is the one which specifies action $a=(a_i, \dots,
%a_N)$ which has got the least penalty across all battery units taken
%togethr in the current time slot.
%$$\pi_{\mathrm{greedy}}(s) \in \argmax_{a \in \A_s} \sum_i -r\i
%\mathds{1}_{\{a\i + b\i = 0\text{ or } a\i + b\i = B\i\}}.$$
$$\pi_{\mathrm{greedy}}(s) \in \argmax_{a \in \A_s} \R(s,a).$$
%\jk{Update based on new penaty formulation.}\vd{Done!}
Note that if there are multiple maximizers of the instantaneous
reward, it is assumed that $\pi_{\mathrm{greedy}}$ picks one of
them.\footnote{Thus, there is really a family of greedy policies. We
use $\pi_{\mathrm{greedy}}$ here to denote any one such greedy
policy.}  Note that implementing $\pi_{\mathrm{greedy}}$ requires
knowledge of (a) $\A_s,$ which is dictated by ramp/capacity
constraints, and (b) the cycling costs, defined by $(r\i,\ 1 \leq i
\leq N)$---a reasonable expectation in practice.

The optimality of greedy battery operation in an idealized special
case is shown below.
\begin{theorem}\label{thm: greedy being optimal}
  For each Battery~$i,$ suppose that
  %$c\i \geq \max_{x \in \s_e} |f(x)|$
  $c\i \geq \B\i$ (i.e., the ramp constraints are never binding), and
  $\eta\i = 1$ (i.e., the batteries are lossless). Then the
  policy~$\pi_{\mathrm{greedy}}$ is optimal for the infinite horizon
  discounted reward objective~\eqref{eq:infinite_horizon_obj}.
\end{theorem}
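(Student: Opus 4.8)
The plan is to prove that the greedy policy maximizes the infinite-horizon discounted reward by showing that, under the stated idealizations ($c^{(i)} \geq \B^{(i)}$ and $\eta^{(i)} = 1$), the greedy action in any state does not constrain future rewards. The key structural observation is that when ramp constraints are never binding and there are no losses, the battery dynamics simplify to $B^{(i)}_{k+1} = B^{(i)}_k + A^{(i)}_k,$ and the action-sum constraint~\eqref{eq:action_sum_constraint} reduces to $\sum_i a^{(i)} = [f(x)]_{m(b)}^{M(b)}$ with $m(b) = -\sum_i b^{(i)}$ and $M(b) = \sum_i (\B^{(i)} - b^{(i)}).$ Thus the only genuine coupling across time is through the aggregate stored energy; the \emph{distribution} of a fixed aggregate across the individual batteries can be rearranged freely in a single step (since each battery can absorb or release its full range in one epoch).

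First I would establish a decoupling-type lemma: because the net generation $f(X_k)$ is exogenous (driven by the DTMC, independent of actions) and the aggregate charge level after applying the action is forced by~\eqref{eq:action_sum_constraint} to a value independent of \emph{how} we split the action across batteries, the only thing the agent controls in each slot is the allocation of a predetermined aggregate change across the $N$ batteries. Consequently, the reachable set of per-battery occupancy vectors at each future time depends only on the sequence of exogenous net-generation values, not on past allocation choices—any aggregate occupancy target reachable under one allocation history is reachable under any other. This means the per-slot optimization problems decouple: the instantaneous reward at time $k$ depends only on the post-action occupancies $a^{(i)} + b^{(i)},$ and the feasible set of these post-action occupancy vectors (subject to summing to the forced aggregate) is the \emph{same} regardless of history.

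Next I would argue that greedily minimizing the per-slot penalty is therefore globally optimal: since the feasible post-action occupancy vector at each slot is constrained only by the fixed aggregate (and the box constraints $0 \leq a^{(i)} + b^{(i)} \leq \B^{(i)}$), and since this aggregate is exogenously determined, the agent faces an independent static optimization in each slot. Maximizing $\R(S_k, A_k)$ slot-by-slot hence simultaneously maximizes every term in the discounted sum~\eqref{eq:infinite_horizon_obj}, so the greedy policy is optimal. I would make this precise by an exchange/coupling argument: given any policy $\pi,$ I would construct a modified policy that agrees with $\pi$ on the aggregate charge trajectory (which $\pi$ cannot avoid, being exogenously pinned) but re-allocates within each slot to the greedy optimizer, showing term-by-term that the reward never decreases.

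The main obstacle, and the step requiring the most care, is verifying that the set of achievable aggregate-charge trajectories is genuinely independent of the allocation policy—i.e., that greedily reallocating within a slot never shrinks the feasible post-action occupancy set in a \emph{future} slot. One must check that the box constraints on individual batteries do not introduce hidden intertemporal coupling: a bad allocation today could, in principle, push some battery to an extreme from which the forced aggregate action tomorrow cannot be matched by a low-penalty allocation. The lossless, ramp-free assumptions are exactly what neutralize this concern, since each battery can traverse its entire range $[0, \B^{(i)}]$ in a single step; I would verify carefully that $m(b)$ and $M(b)$ depend on $b$ only through aggregate quantities in the relevant regime, so that the forced aggregate action $[f(x)]_{m(b)}^{M(b)}$ is itself allocation-invariant, closing the argument.
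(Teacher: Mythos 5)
Your proposal is correct and follows essentially the same route as the paper's proof: the two pillars of your argument---that the aggregate stored energy evolves identically under every policy (since $m(b)$ and $M(b)$ reduce to functions of $\sum_i b^{(i)}$ when $c^{(i)} \geq \B^{(i)}$ and $\eta^{(i)} = 1$), and that the absence of ramp constraints permits arbitrary intra-slot rearrangement so the attainable instantaneous reward depends only on that aggregate---are precisely the two observations the paper invokes, from which slot-by-slot (greedy) maximization is globally optimal. Your version simply makes explicit the details the paper leaves as ``it then follows easily,'' notably the verification that the forced aggregate action is allocation-invariant and the term-by-term exchange argument.
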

\begin{proof}
  The optimality of any greedy policy follows from two
  observations. First, in the absence of battery losses, at any
  time~$k,$ the total energy in the battery bank $\sum_{i = 1}^N
  B\i_k$ is conserved across \emph{all policies}. Second, since
  arbitrary energy rearrangements are possible between batteries (due
  to the absence of ramp constraints), the instantaneous reward
  attainable does not depend on the specific placement of energy
  across batteries. It then follows easily that greedy operation is
  optimal.
  \ignore{ The reason is as follows: at any time instance $k$, the sum
    total of energy across battery units is the same, both in optimal
    policy $(\pi_\star)$ and in greedy policy $(\pi_{greedy}).$
    Therefore any battery level configuration
    $(b_1^{k+1},\dots,b_N^{k+1}),$ achieved by an optimal action
    $\pi^\star (s^k),$ is always achievable with a suitable greedy
    policy action.}
\end{proof}

In general, particularly when the battery bank is heterogenous with
respect to ramp constraints and/or losses, we find that greedy
operation can be far from optimal; see
Section~\ref{sec:case_study}. Intuitively, this is because in such
cases, the total energy in the battery bank is not conserved across
different policies. Moreover, the presence of ramp constraints implies
that there are certain `preferrable' arrangements of the stored energy
across the battery bank that minimize average penalties incurred going
forward, given the random dynamics of generation/demand. The
non-triviality of the optimal policy in the presence of practical
constraints on the operation of the battery operation motivates an RL
based approach, which we address next.

\ignore{
The problem is inherently non-trivial with constraints $c_i,d_i$ and
greedy policy is not good in long run. This motivates for finding the
optimal policy $\pi^\star$ in presence of constrainst $c_i, d_i.$

\subsection{Issues with finding the optimal policy $\pi^\star$}
Points to be addressed here:
\begin{itemize}
	\item State space explosion: which is inherent in our model due to the large battery size and granularity with which we measure the energy 
	\item Curse of dimensionality
	\item Inability of \emph{Q-learning} to deal with large state space
	\item motivation for using function approximation
\end{itemize}
}

\section{Reinforcement Learning based approach}
\label{sec:rl}

In this section, we describe an RL based approach for online learning
of the optimal policy for battery bank management. Specifically, we
employ a $Q$-learning algorithm with linear function approximation
\cite{sutton2018reinforcement}.
%\vd{(We use a parametric linear function approximation to estimate
%  Q-functions??? We don't use Q-learning `algo' as such.)}
Our key contribution lies in the design of a suitable family kernel
functions that captures the specific structure of the value functions
in our formulation. The proposed approach is validated via an
extensive case study in Section~\ref{sec:case_study}.

We begin by providing some background on Q-learning. Let us denote an
optimal stationary deterministic policy of the MDP by $\pi^*$ (the
existence of such an optimal policy is guaranteed by the theory;
see~\cite{puterman2014markov}). The \emph{state value function} associated with the
MDP is defined as
$$V(s):= \Exp{\sum_{k=0}^\infty \gamma^k \R(S_k,\pi^*(A_k))\ |\ S_0 = s}.$$
The \emph{state-action value function}, a.k.a., the
\emph{$Q$-function} associated with the MDP is defined as
$$Q(s,a):= \R(s,a) + \gamma \sum_{s' \in \s} \p(s'|s,a)V(s').$$
%\vd{($\star$ required in definition of optimal value funtion and
%  optimal Q function??? coz we are obtaining optimal $\pi^\star$ from
%  them. If we put $\star$ in these definition, we can remove the
%  $\hat{\cdot}$ in Q-learning algorithm in \eqref{eq:Q-learning}) and
%  instead use it to denote estimated values in
%  eq~\eqref{eq:weight_update}}\jk{To discuss.}
It is well known that the
$Q$-function satisfies the Bellman equation:
$$Q(s,a):= \R(s,a) + \gamma \sum_{s' \in \s} \p(s'|s,a) \left[\max_{a'
    \in \A_{s'}} Q(s',a')\right].$$ Moreover, learning the
$Q$-function also reveals the optimal policies, since $\pi^*(s) \in
\argmax_{a \in \A_s}Q(s,a).$

$Q$-learning algorithms seek to learn the $Q$-function online,
typically via a stochastic approximation based update rule of the form
% optimal policy $\pi^\star$ is such that $V^{\pi^\star}(s) \geq
%V^{\pi}(s) \ \forall s, \pi. $ The optimal state-action value
%$Q^\star(s,a)$ is obtained by taking optimal actions using
%$\pi^\star$ \begin{equation} \label{eq:Q-value}
%Q^{\star}(s,a):=[\R(s,a)+\gamma \sum_{s^\prime} p(s^\prime,r|s,a)
%\max_{a^\prime} Q^{\star}(s^\prime, a^\prime)].  \end{equation}
%(Vivek says: For every chosen policy, we can do maximization over
%action. It may not give us the optimal state-action value. That's why
%I chose the symbol $\pi^\star$ to distinguish those quantities from
%the corresponding quantities in non-optimal policy $\pi.$) When the
%\emph{Q-function} converges to $Q^\star,$ the corresponding optimal
%policy $\pi^\star$ is given by $\pi^\star (S) = \arg \max_A
%Q^\star(S,A).$ \footnote{We can also find $\pi^\star$ using value
%iteration algorithm when the MDP parametrs are known. That would not
%be the RL approach. }
% The \emph{Q-learning} algorithm is given by (see \ref{})
\begin{align}
  \hat{Q}(S_k, A_k) &\leftarrow \hat{Q}(S_k,A_k) +\beta_k
  \biggl[\R(S_k,A_k) \nonumber \\ &\qquad + \gamma \max_{a^\prime \in
      \A_{S_{k+1}}} \hat{Q}(S_{k+1}, a^\prime ) -
    \hat{Q}(S_k,A_k)\biggr],
  \label{eq:Q-learning}
\end{align}
where $\beta_k \in (0,1)$ is the learning rate (a.k.a., step-size) at
time $k.$ Note that the above update rule does not specify the policy
that determines the action~$A_k$ at time~$k.$
%\emph{Q-learning} is independent of the policy chosen.
It is known that \eqref{eq:Q-learning} results in almost sure
convergence to the $Q$-function provided the chosen policy ensures
sufficient exploration of all the state-action pairs (see \cite{sutton2018reinforcement}),
and the step-size sequence satisfies $$\sum_k \beta_k =\infty, \quad
\sum_k \beta_k^2 < \infty.$$

In the specific application context under consideration, it is not
practical to implement $Q$-learning algorithms directly, given the
prohibitive number of state-action pairs. This so-called state-action
space explosion adversely impacts both the memory footprint of the
algorithm (since one must store the entire $Q$-table), and time
required to learn a near-optimal policy (since each state-action pair
needs to be visited sufficiently many times). We thus resort to
\emph{function approximation}, wherein the $Q$-function is
approximated to be a linear combination of carefully designed family
of features or kernel functions:
\ignore{ Every iteration in \emph{Q-learning} involves computaion with
  all $Q(s^\prime,\cdot)$ values (see the max operator in
  \eqref{eq:Q-learning}) and therefore needs memory to store them.
%Inspite of these stringent requirements, {Q-learning} converges to
%optimal policy with sufficient exploration. The reason is that more
%often, the values $Q(s,\cdot)$ maintains the inherent functional
%shape and the values $Q(s,\cdot)$ are so widely separated by actions
%that even though $\max_a Q(s,a) \ne V^\star(s) \text{ for some } s
%\in \s,$ $\arg \max_a Q(s,a)$ is same as optimal action, which
%results in yielding $\pi^\star$ via \emph{Q-learning}.  Finding
%$Q(s,a)$ for a state-action pair, requires that the \emph{Q-learning}
%algorithm in \cref{eq:Q-learning} goes through computations with all
%tabular values of Q-table.
When dealing with very large state-action space, the algorithm is not
computationally efficient. State space explosion is inherent in our
model due to the large battery sizes and the small resolution with
which we measure the energy and therefore, \emph{Q-learning} is not
suitable for our proposed model.

To deal with large state-space, we use a linear function approximation
to estimate $Q^\star (s,a)$ for every state-action pair.

 The reason for prefering function approximator over \emph{Q-learning}
 is because of its ability to generalise from the limited experience
 of online learning and estimate $Q(s,a)$ even for those state-action
 pairs which has not encountered or may be not explored
 sufficiently. \emph{Q-learning} fails to do this.

The estimator $\tilde{Q}(\cdot)$ is a linear combination of features
$\{ \phi_i(s,a) \}$ extracted from the state-action pair $(s,a)$ and
is given by
} %end ignore
\begin{equation}\label{eq:fun_approx}
  Q(s,a,w) \approx \phi (s, a)^T\cdot w = \sum_{i=1}^d \phi_i(\cdot)
  w_i
\end{equation}
%\vd{Shall we change the above equation to $\hat{Q}(s,a)=\sum_{i=1}^d
%\phi_i(\cdot) w_i$ coz this linear estimator $\hat{Q}(\cdot)$ is used
%quite often in the text without explicitely mentioning it.}
Here, $\phi(s,a) \in \mathbb{R}^d$ is a vector consisting of $d$
distinct features corresponding to state-action pair $(s,a)$ and $w
\in \mathbb{R}^d$ is the \emph{weight vector} to be
learned. Naturally, this is appealing when $d \ll |\{(s,a)|s\in \s, a
\in \A_s\}|.$ %\vd{(anything missing after this line???)}

In the remainder of this section, we describe our design of the family
of features, and then state formally the proposed RL algorithm.

%While dealing with state-space eplosion, the feature vector dimension
%$di$ is much smaller than

% \subsection{Extracted feature space} The feature space $\phi(\s,\A)$
%is not necessarily smaller than $\s$ indicating that the dimensional
%reduction, though always preferred for compuational efficiency
%through carefully crafted features, is not guaranteed when estimator
%\eqref{eq:fun_approx} is a complicated function and therefore
%requires more number of features. The number of features extracted in
%such cases is more than the combined dimension (not cardinality) of
%original state-action space $\{ \s,\A \},$ i.e., $d> 2N+1.$
%\big(Vivek says: For the features $f=-(1-x),g=-x^2,$ which we
%considered separately for each net generation state, feature space
%obtained is in fact bigger than our original state-action space
%$\{\s,\A\}$ i.e., $d=16+1> 2N+1=5$\big) Therefore the advantage of
%using function approximation is not always the about dealing with
%`curse of dimentionality' or `state-space explosion' but about its
%ability to generalize and estimate $Q(s,a)$ for any $(s,a)$ once
%weights are tuned using limited online learning using sufficiently
%exploratory policy $\pi.$ \textbf{Vivek says: I am skeptical about
%this statement.}

\subsection{Design of Kernel functions}

The $d$ distinct features chosen for each state-action pair $(s,a),$
where $s = (x,b),$ are captured in the feature vector $\phi(\cdot)$ as
follows
\begin{align*}
	\phi(s,a) = \big( \R(s,a), \  v_1 \mathds{1}_{\{x=1\}},& \ v_2 \mathds{1}_{\{x=2\}}, \nonumber\\
	& \cdots , v_{|S_e|} \mathds{1}_{\{x=|S_e|\}}   \big).
\end{align*}
Here each vector $v_j \in \mathbb{R}^{2N}$ corresponds to the $j^{th}$
state of the background DTMC $\{X_k\}$ governing the net generation,
and is defined as $v_j = $
%$v_j= -( 1-\frac{b^{(1)}+a^{(1)} }{ B^{(1)} } )$
\begin{equation}\label{eq:features_fr_a_bkgnd_state}
\Big(1, -(1-y_1)^4, -y_1^4, -(1-y_2)^4, -y_2^4, \cdots,
	-(1-y_N)^4, -y_N^4 \Big)
\end{equation}
where $y_i$ denotes the normalised
occupancy of Battery~$i,$ given by $y_i = \frac{b^{(i)} + a^{(i)}
}{\B^{(i)}}.$ Thus, in our design, $d = (2N+1)|\s|+1.$

The first feature is simply the instantaneous reward corresponding to
the state action pair $(s,a).$ %and the second is a constant (taken to be~1). 
The remaining features are interpreted as follows. For each
background state~$x,$ and for each Battery~$i,$ we introduce two
features: an increasing concave function of the normalized battery
occupancy resulting from the action~$a,$ and another descreasing
concave function of the same normalized battery occupancy; see
Figure~\ref{fig:kernel_fun}.
%\jk{Should you not have $y_i$ rather than $x$ on the $y$-axis labels
%  of the figures?}\vd{Corrections done!}
\begin{figure}
	\centering
	\includegraphics[scale=0.3]{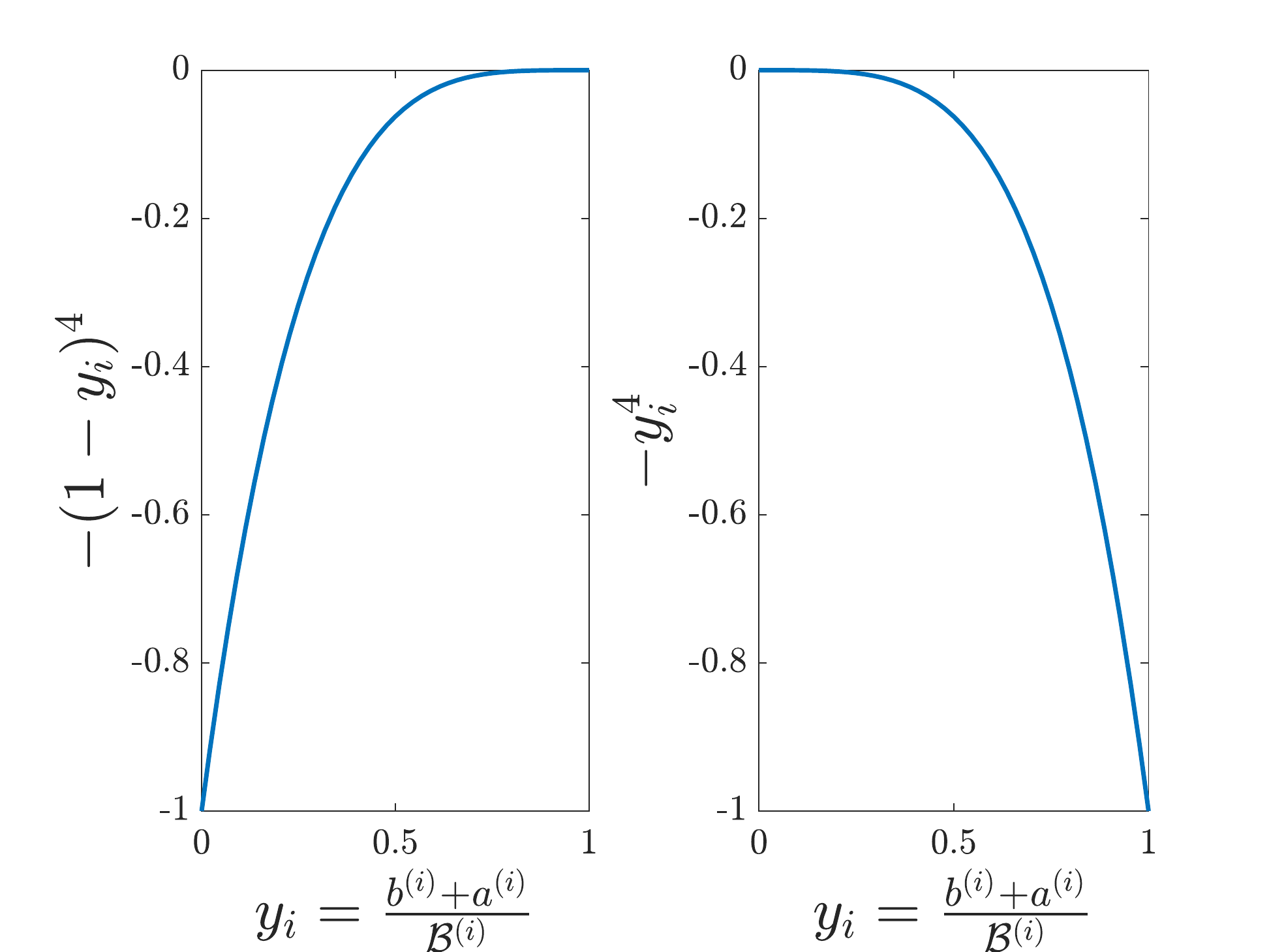}
	\caption{Two Kernel functions as a function of normalised
          battery level $y_i.$}
	\label{fig:kernel_fun}
\end{figure}
The former captures the impending (discounted) penalty arising from
Battery~$i$ becoming empty in the future; the increasing shape of this
function seeks to capture the increased likelihood of a penalty in the
near future when the present battery occupancy is lower (see the left
panel of Figure~\ref{fig:kernel_fun}). Similarly, the latter feature
captures the discounted `penalty to go' arising due to Battery~$i$
becoming full in the future; the descreasing nature of this function
capturing the increased likelihood of this event when the present
battery occupancy is closer to full (see the right panel of
Figure~\ref{fig:kernel_fun}). We introduce these two features for each
background state~$x$ separately to account for the fact that the
probability of future penalties is also dictated by~$x.$\footnote{A
more economical approach would be to cluster the background states,
and assign a feature to each cluster; this will be explored in the
future.} In addition, for each background state~$x,$ we add a constant
feature taking the value 1 (see
equation~\eqref{eq:features_fr_a_bkgnd_state}). As we show in
Section~\ref{sec:case_study}, this family of features captures the
behavior of the $Q$-function sufficiently well to enable the learning
of a near-optimal policy.

\ignore{
The function approximator is trying to estimate
\begin{equation} \label{eq:Q-value}
  Q(s,a):=[\R(s,a)+\gamma  \sum_{s^\prime} p(s^\prime,r|s,a) \max_{a^\prime}   Q(s^\prime, a^\prime)]
\end{equation} associated with the optimal policy $\pi^\star.$
Therefore it is natural to pick $\phi_1(s,a)=\R(s,a)$ as the first
choice for the extracted feature. The associated weight $w_1$ with it,
needs to be learned.
%Note that we can't ignore the weight $w_1$ associated with this
%feature because, if we do so, then the error which we minimize in
%weight update equation \eqref{eq:weight_update} is not really between
%the estimated value $\tilde{Q}(s,a,w)$ and the so-called `true value'
%$R+\gamma \tilde{Q}(s^\prime a^\prime, w)$ which is one step observed
%estimate (or value).
 The impact of this feature on the estimator $\tilde{Q}(s,a)$ is quite
 apparent from \cref{eq:Q-value}.

The second choice is to capture the dependency of estimator
$\tilde{Q}(s,a)$ on battery energy levels. For this we observed that
as we increase (or decrease) the $i^{th}$ battery energy to full (or
empty), the tendency to hit the boundary and incur the penalty $-r_i$
is more. This behaviour can be captured by functions like
$-(1-x)^4\text{ or } -x^4.$ Therefore we choose $\phi_2=-(1-x)^4$ and
$\phi_3=-x^4$ where $x$ is the current normalised battery level in the
$i^{th}$ battery under consideration. The linear combination of
$\phi_2$ and $\phi_3$ gives us the concave like shape which becomes
more negative as we move towards either empty or full direction. We
obtain these two features for each of the battery unit, thereby giving
us $2N$ number of features labeled as $\{ \phi_2, \phi_3, \dots,
\phi_{2N}, \phi_{2N+1} \}$.

We also captured the dependency of above features $\{ \phi_2, \phi_3,
\dots, \phi_{2N}, \phi_{2N+1} \}$ on the current state of \emph{net
generation.}  This makes the feature dimention to grow in
multiplication to the number of net generation states $|\s_e|.$
Therefore, the dimesion of feature vector $\phi(\cdot)$ is $d =
1+2N|\s_e|.$ In the vector $\phi(\cdot),$ except for the first feature
$\phi_1, $ only $2N$ features, namely, $\{ \phi_{2N(i-1)+2},
\phi_{2N(i-1)+3}, \dots, \phi_{2Ni}, \phi_{2Ni+1} \}$ are computed
($i$ is the index of current net generation). The rest of the
$d-(2N+1)$ entries in $\phi(\cdot)$ are zero.
} %end ignore

\subsection{Learning weights via stochastic semi-gradient descent}
The weight vector $w$ in \eqref{eq:fun_approx} is learned online by
stochastic semi-gradient descent method using a suffiently exploratory
policy $\pi$ over state-action pair $(s,a).$ The term `semi' refers to
the fact that the error being minimized through gradient descent is
between the \emph{estimated} values $\hat{Q}(S_k,A_k,w_k)$ and
$\left(R(S_k,A_k) + \gamma \max_{a' \in \A_{S_{k+1}} }
\hat{Q}(S_{k+1},a',w_k)\right).$ The update of weights is performed in
conjunction with an $\epsilon$-greedy policy to ensure a balance
between exploration and exploitation.% (see Algorithm~\ref{alg:SSGD}).

Formally, %with policy $\pi$ governing the choice of actions,
the weights are updated using stochastic semi-gradient descent as
follows:
\begin{align}
  w &\leftarrow w + \beta_k \big[\R(S_k,A_k) + \gamma \max_{a \in \A_{S_{k+1}}} \hat{Q} (S_{k+1},a,w) \nonumber \\
    &\quad - \hat{Q} (S_k,A_k, w) \big] \phi (S_k,A_k), \label{eq:weight_update}
\end{align}
where $\hat{Q} (s,a, w) = \phi(s,a)^T w.$ Here, $\{\beta_k\}$ is the
step-size sequence. The policy~$\pi$ that selects the action~$A_k$ at
each time~$k$ is taken to be the $\epsilon$-greedy policy,
parameterized by the sequence~$\{\epsilon_k\}.$ Formally, at time~$k,$
with probability~$\epsilon_k,$ the action~$A_k$ is picked uniformly at
randomly from the set~$\A_{S_k},$ and with
probabability~$1-\epsilon_k,$ $A_k = \argmax_{a \in \A_{S_k}}
Q(S_k,a).$

\ignore{
  \begin{align}
	w^\prime &= w -\frac{1}{2} \alpha \triangledown_w \big[ R+\gamma \hat{Q} ( s^\prime, \pi(s^\prime), w ) -	\hat{Q} ( s, \pi(s), w ) \big]^2 \nonumber\\
	&= w + \alpha \big[ R+\gamma \hat{Q} ( s^\prime, \pi(s^\prime), w ) \nonumber \\
	&\hspace{3cm} -	\hat{Q} ( s, \pi(s), w ) \big] \triangledown_w \hat{Q} ( s, \pi(s), w ) \nonumber\\
	&= w + \alpha \big[ R+\gamma \hat{Q} ( s^\prime, \pi(s^\prime), w ) -	\hat{Q} ( s, \pi(s), w ) \big] \phi ( s, \pi(s) ) \label{eq:weight_update}
  \end{align} 
}

%where $\alpha$ is a step size or learning rate for the gradient
%descent and $\triangledown_w $ denotes gradient w.r.t. weight vector
%$w.$

\ignore{
\begin{algorithm}\label{alg:SSGD}
	\SetAlgoLined
	\KwResult{learned weight vector $w$}
	Input: a differentiable linear function $\hat{Q}(\cdot): \{\s\times \A_s\} \rightarrow \mathbb{R}$ \\
	Initialize: weights $w \in \mathbb{R}^d \ (w=0)$\\
	\hspace{1.4cm} step-size $\alpha \in (0,1)$ \\
	\hspace{1.4cm} exploration probability $\epsilon \in (0,1)$\\
	%\hspace{1.4cm} $S, \ A \sim$ uniformly random over $\A_s$ \\
	\hspace{1.4cm} state $S,$ action $A$ \\
	\While{time horizon not reached}{
		Take action $A$\\
		Observe reward $\R(S,A)$\\
		Observe next state $S^\prime$\\
		Choose $\epsilon$-greedy action $A^\prime$ as a function of $\hat{Q}(S^\prime,\cdot)$
		\begin{align*}
			w \leftarrow w +\alpha [ \R(S,A) &+\gamma \hat{Q}(S^\prime,A^\prime,w)\\
			&- \hat{Q}(S,A,w)] \phi (S,A)
			%S \leftarrow & S^\prime \\
			%A \leftarrow &A^\prime
		\end{align*}
			$\ S \leftarrow  S^\prime$ \\
		    $\ \ A \leftarrow A^\prime$
%		\eIf{condition}{
%			instructions1\;
%			instructions2\;
%		}{
%			instructions3\;
%		}
	}
	\caption{Stochastic semi-gradient descent for learning weights $w$ }
\end{algorithm}
}

\section{Case Study}
\label{sec:case_study}

\ignore{ We did a case study with a toy model to illustrate the use of
  proposed function appoximation based RL technique for battery
  management considering the presence as well as absence of
  constraints $c_i.$ We also did a comparison between a sample
  evaluation of greedy policy, a simple \emph{naive policy} which
  apportion the net generation $E(k)$ proportional to the battery size
  $\B^{(i)},$ and the policy obtained using function approximation (an
  RL approach). In the naive policy, since the energy in measured in
  the integer unit, the apportioned net generation at battery $i$ is
  rounded off to the nearest integer. Any error in apportioning
  arising due to rounding off operation is adjusted in the $N^{th}$
  battery unit such that the resulting naive policy action $a$ should
  satisfy
  equations~\eqref{eq:ramp_constraint}-\eqref{eq:action_sum_constraint}.
}

In this section, we present simulation results on a toy model to
validate the proposed function approximation based RL algorithm. We
benchmark the performance of this policy against a certain naive
policy (which charges and discharges the battery bank in a
proportional manner) as well as a greedy policy. Our results
demonstrate that the proposed approach outperforms the greedy and
naive policies, suggesting that the proposed collection of features
does a good job of capturing the structure of the value function.

To provide a fair comparison between the three policies, we separate
the learning and the evaluation phases of our proposed
policy. Specifically, we learn the $Q$-table via function
approximation using the proposed algorithm over $T = 10^5$ time
steps. After this, we compare the `optimal' policy suggested by the
learned $Q$-function, i.e., 
\begin{equation}\label{eq:fun_approx_policy}
  \pi_{\mathrm{RL(fun\ approx)}}(s) \in \argmax_{a \in \A_s}
  \hat{Q}(s,a,w),
\end{equation}
against the benchmark policies over another run of~$T$ time steps.

While the greedy policy has been described in
Section~\ref{sec:greedy}, the naive (proportional) policy is described
as follows. In essence, it attempts to apportion the net generation
across the battery bank in a manner that is proportional to the size
of each battery, i.e., $a\i \propto \B\i.$ Of course, a perfectly
proportional allocation may not be feasble, due to interger-round off
errors, as well as ramp constraints. If this happens, the naive
policy performs a local search around the proportional allocation
until a feasible action is found; the details of this search are
omitted due to space constraints.

%The optimal policy is found by value iteration algorithm which is
%computationally inefficient, and therefore we have deliberately
%chosen moderate state-action space configurations in the simulations
%so that finding optimal policy is computationally feasible, and then
%did its comparison with the policy obtained via RL approach.

%In the function approximation method, the learned weights are used to
%estimate Q-functions for the given state and the policy is obtained
%as follow \vd{Shall we move the above equation to the previous
%section?}

\subsection*{Toy Model Setup}
%The \emph{net generation} $E_k$ depends on the state of background Markov process $\{ X_k \}.$ 
The state space of the background DTMC process $X_k$ is taken to be
$\s_e=\{-4,-1,1,5\},$ with the net generation associated with state~$x
\in \s_e$ being simply~$x.$ The transition probability matrix
corresponding to the background Markov process $\{ X_k \}$ is taken to
be
$$P=\begin{bmatrix}
	0	   &0.5	   &0.3	  &0.2\\
	0.5   &0       &0.1    &0.4\\
	0.3   &0.2    &0      &0.5\\
	0.3   &0.3    &0.4   &0 
\end{bmatrix}.$$ 
Two battery units are considered with penalty multipliers
$r^{(1)}=-0.1,\ r^{(2)}=-1;$ this captures the heterogeneity in
cycling costs. Various battery size configurations $(\B^{(1)},
\B^{(2)} )$ are considered for the simulation; see
Tables~\ref{table:no_constraints}-\ref{table:with_constraints}. For
simplicity, we ignore losses in these preliminary evaluations.

\ignore{The constraints, when imposed, are taken as
  $c^{(1)}=c^{(2)}=2.$ The state space $\s_e$ of the background
  process $X_k,$ along with the chosen battery configuration
  $(\B^{(1)}, \B^{(2)}),$ defines the state space $\s.$ (Recall that
  $\s := \s_e \times \prod_{i=1}^N [\B_i].$) Number of transitions
  $T,$ in the DTMC $X_k,$ will define the time horizon over which the
  function appoximation weights $w$ are learned using
  Algorithm~\ref{alg:SSGD}.  It also defines the time horizon for
  battery management over which the policy evaluation is carried out
  for greedy, optimal and RL based policy.
%It is chosen as $T=10^5.$
}

\subsection*{Simulation results}

First, we consider the case where ramp constraints are never binding
(by choosing $(c\i,\ 1 \leq i \leq N)$ large enough). In this case, it
is known that greedy operation is optimal, allowing us to evaluate
whether or not the proposed function approximation based approach is
also able to learn the optimal policy. Our results, comparing the
rewards obtained by the three policies under consideration, are
summarized in Table~\ref{table:no_constraints}. Interestingly, we see
in that all settings considered, the proposed RL based approach
achieves exactly the same reward as the (optimal) greedy policy,
whereas the naive policy performs worse. This suggests that for the
MDP instances considered here, the proposed function approximation
algorithm learns an optimal policy, validating our design of the
kernel functions.

\ignore{In the simulation, first the weights $w$ are learned using
  Algorithm \ref{alg:SSGD} where the time horizon is taken as $10^5.$
  The resultant policy obtained through learned weights (as in
  equation~\ref{eq:fun_approx_policy}), is then evaluated along with
  greedy and naive policies by taking one sample trajectory of
  background process $X_k$ with $T=10^5$ number of transitions. The
  sample path of the DTMC $X_k$ is coupled with all three policies.
  We analysed the system in the presence as well as in the absence of
  constraints $c^{(1)}$ and $c^{(2)}$ separately. The results obtained
  are as tabulated in
  Table~\ref{table:no_constraints}-\ref{table:with_constraints}.
}

Next, we consider settings where ramp constraints can be
binding. Specifically, we set $c^{(1)} = c^{(2)} = 2.$ The results for
case, corresponding to various battery size combinations, are
summarized in Table~\ref{table:with_constraints}. Note that the
proposed RL based approach outperforms both the (no longer optimal)
greedy policy, as well as the naive policy.

Validating the proposed approach on `production scale' MDP instances,
derived from real-world generation/demand traces, will be the
addressed in a forthcoming journal version of this paper.

\ignore{
\subsection*{Observations}
As evident from Table~\ref{table:no_constraints}, when the constraints
are not imposed\footnote{$c\i=25$ in Table~\ref{table:no_constraints}
is equivalent to no constraints scenario}, greedy policy performs
better than naive policy, in terms of reward obtained. This is
expected since Theorem~\ref{thm: greedy being optimal} implies that
greedy policy is optimal in the absence of constraints. Our proposed
RL policy using function approximation $\pi_{\mathrm{RL(fun
    \ approx)}}$ is able to mimic the optimal policy behaviour and
therefore the reward with $\pi_{\mathrm{greedy}}$ and
$\pi_{\mathrm{RL(fun \ approx)}}$ is the same (see
Table~\ref{table:no_constraints}). The naive policy is not good
because the reward obtained in it is least among the three policies.

When the constraints are imposed, greedy is no longer the optimal policy (Table~\ref{table:with_constraints}). The performance of $\pi_{\mathrm{RL(fun \ approx)}}$ is better than greedy and naive policies, in terms of the reward obtained (see Table~\ref{table:with_constraints}). Again the the performance of greedy policy is worst among the three policies.
}

\begin{table}[h]
	\begin{center}
		\begin{tabular}{ | c| c| c| c|}
			\hline
			$(\B^{(1)},\B^{(2)})$  &$\pi_{\text{greedy}}$ & $\pi_{\text{naive}}$   & $\pi_{\text{RL(function approx.)}}$\\		
			\hline
			2,3       &-68081 &-68081 &-68081  \\
			3,5       &-48532 &-51187  &-48532  \\
			6,10     &-63853 &-73619   &-63853  \\
			10,10    &-57859  &-74729  &-57859  \\
			15,10    &-52768 &-74842  &-52768 \\
			15,15    &-72490 &-110960  &-72490\\
			20,20   &-91273 &-144160 &-91273 \\
			%25,25   &-108980 &-178540 &-114320(poor performance \\
			%30,30   &- &-  &-  \\
			%40,40   &- &- &-  \\
			%50,50   &- &- &-  \\
			\hline   
		\end{tabular}
	\end{center}
	\caption{Reward obtained across different policies when no constraints are imposed:  $S_e=\{-4,-1,1,5\},$ $T=10^5;$ cycling costs prefactors $r^{(1)}=-0.1, r^{(2)}=-1$}
	\label{table:no_constraints}
\end{table}

\begin{table}[h]
	\begin{center}
		\begin{tabular}{ | c| c| c| c|}
			\hline
			$(\B^{(1)},\B^{(2)})$  &$\pi_{\text{greedy}}$ & $\pi_{\text{naive}}$   & $\pi_{\text{RL(function approx.)}}$\\	
			\hline
			2,3       &-61212 &-61212 &-61212 \\
			3,5       &-45010 &-46861 &-44831 \\
			6,10     &-60759  &-64053   &-61774 \\
			10,10    &-66177 &-67867   &-56920 \\
			15,10    &-66146 &-67849  &-54579  \\
			15,15    &-80171 &-96938 &-68135   \\
			20,20   &-88881 &-115150 &-75473  \\
			25,25   &-99762 &-147800 &-87300  \\
			30,30   &-119320 &-184450 &-105630 \\
			40,40   &-125130 &-230460 &-113120  \\
			50,50   &-137580  &-286050 &-125350  \\
			\hline    
		\end{tabular}
	\end{center}
	\caption{Reward obtained across different policies when constraints are imposed: $c^{(1)}=c^{(2)}=2,$ $S_e=\{-4,-1,1,5\},$ $T=10^5;$ cycling costs prefactors $r^{(1)}=-0.1, r^{(2)}=-1$}
	\label{table:with_constraints}
\end{table}

\section{Related literature and concluding remarks}

In this section, we briefly survey some of the related literature, and
then conclude.

There are a few approaches in the literature for operating multiple
heterogeneous battery units. The work in~\cite{wang2015balanced}
proposes a feedback control mechanism to maintain uniform
state-of-charge (SOC) across multiple batteries without considering
the cycling cost. The work~\cite{bauer2019power} proposes another
feedback mechanism to control power flow in storage systems. A circuit
based switching mechanism to operate serially connected battery units
is studied in~\cite{shibata2001management}. A coordinated control of
multiple battery systems for frequency regulation is given in
\cite{zhu2018optimal}, which also proposes a state-of-charge (SOC)
recovery mechanism. These are non-RL approaches which deal with
battery bank operation with specific operational objectives in
mind. The supply-side uncertainty due to renewables, which is central
to the management of grid-scale storage solutions, is not considered
in these works.

On the RL front, the work in~\cite{chaoui2018deep} proposes an energy
management system using deep reinforcement learning to operate the
battery pack in an electric vechicle.

In contrast, the focus of the present paper is on the management of a
heterogenous bank of batteries connected to the grid, with the
objective of minimizing the cycling-induced degradation of the
bank. The key novelty of the approach proposed in this paper is the
use of linear function approximation to address the state-action space
explosion that occurs due to (i) the incorporation of the battery
levels into the state, and (ii) the combinatorial number of actions
that are feasible for each state. We design a novel and compact
collection of feature vectors that let us approximate the structure of
the $Q$-function, enabling the learning of a near-optimal policy
online.

Future work will focus on validating the proposed approach on a
practical utility-scale example, with cycling costs and losses modeled
using manufacturer-provided specifications. In such a case, one would
expect the net generation process to be made up of components that
fluctuate over multiple timescales, resulting in a highly non-trivial
optimal operation of the battery bank.

\ignore{

The workin \cite{wang2015balanced} proposes a feedback control to
maintain uniform state-of-charge (SOC) across multiple batteries but
to not consider the cycling cost. (No RL business here)
\vd{Interesting abstract!}

\cite{bauer2019power} Power flow in storage systems (Non-RL)

\cite{shibata2001management} proposes a circuit based switching
mechanism to operate serially connected battery units. (Non-RL)

\cite{zhu2018optimal} coordinated control of of multiple battery
systems for frequqncy regulation. (Non-RL)

\cite{chaoui2018deep} Energy managegement(charge-discharge) in
multiple battery equipped Electric vehicle using deep reinforcement
learning. The objective is to have a safe operation preventing
overcharging and undercharging, thereby protecting battery units (uses
MDP framework)

Battery management in the literature:\\ 1. Energy arbitrage and
revenue generation using a discrete time optimal operation of a single
storage unit taking advantage of volatile electricity prices and
intermittencies in generation and in load \cite{bibid}. \\ 2. Battery
operation among multiple prosumers where battery energy gets
apportioned among multiple prosumers (economies of scale paper
\cite{deulkar2021statistical}). This is exactly opposite to the
approach we deal in this paper where the the net generation is
apportioned among distributed storage units whereas in economies of
scale paper, storage energy gets apportioned among distributed
prosumers.\\ 3. Operation of multiple battery cells, present in big
storage unitl (for example in a laptop battery \cite{bibid}, or a
large scale Battery Energy Etorage Systems(BESS) \cite{bibid}) through
a battery management systems. The operation in this case do not
consider the stochasticity in power (or energy) modulating the storage
units, rather the emphasis is more on safety operation of battery as
well as the specific load pattern requirements for which the storage
is designed for (see \cite{bibid}).

4. RL based approach
}

\bibliographystyle{IEEEtranN}
\bibliography{refs}

\end{document}